\setlist[enumerate]{leftmargin=.5in}
\setlist[itemize]{leftmargin=.5in}
\newcommand\T{\rule{0pt}{2.6ex}}       
\newcommand\B{\rule[-1.2ex]{0pt}{0pt}} 
\crefname{hypothesis}{Hypothesis}{Hypotheses}
\title{Spike-based primitives for graph algorithms\thanks{Submitted on 25 MARCH 2019. \funding{This manuscript has been authored [in part] by UT-Battelle, LLC, under contract DE-AC05-00OR22725 with the US Department of Energy (DOE). The US government retains and the publisher, by accepting the article for publication, acknowledges that the US government retains a nonexclusive, paid-up, irrevocable, worldwide license to publish or reproduce the published form of this manuscript, or allow others to do so, for US government purposes. DOE will provide public access to these results of federally sponsored research in accordance with the DOE Public Access Plan (\url{http://energy.gov/downloads/doe-public-access-plan}).}}}
\author{Kathleen E. Hamilton\thanks{Computer Science and Engineering Division, Oak Ridge National Laboratory, Oak Ridge, TN (\textit{corresponding author} \email{hamiltonke@ornl.gov}).}
\and Tiffany M. Mintz \thanks{Computer Science and Mathematics Division, Oak Ridge National Laboratory, Oak Ridge, TN (\email{mintztm@ornl.gov}, \email{schumancd@ornl.gov}).}
\and Catherine D. Schuman \footnotemark[3]}
\newcommand*{\addFileDependency}[1]{
  \typeout{(#1)}
  \@addtofilelist{#1}
  \IfFileExists{#1}{}{\typeout{No file #1.}}
}
\begin{document}

\maketitle

\begin{abstract}
In this paper we consider graph algorithms and graphical analysis as a new application for neuromorphic computing platforms. We demonstrate how the nonlinear dynamics of spiking neurons can be used to implement low-level graph operations.  Our results are hardware agnostic, and we present multiple versions of routines that can utilize static synapses or require synapse plasticity.
\end{abstract}

\begin{keywords}
  graph algorithms, neuromorphic computing, graph theory, discrete mathematics
\end{keywords}

\begin{AMS}
  05C85, 68W99,78M32, 65Y10,68W05
\end{AMS}

\section{Introduction}
\label{sec:introduction}
Spike-based algorithms are designed to run on neuromorphic hardware and rely on discrete electrical pulses transmitted via weighted connections to implement programmatic steps. In this work we identify and establish how many general graph algorithms can be implemented using spiking neurons for future deployment on neuromorphic hardware.  While computational models of spiking neurons have been developed to describe the behavior of biological neurons, our goal in this work is to fundamentally exploit the spiking behavior encoded by a given mathematical neuron model.  Developing new applications for neuromorphic hardware is a growing field of research \cite{paper24_pmes2018}, with algorithms being developed for community detection \cite{NCAMA_arXiv,Hamilton_PRE_arXiv,quiles2010label,quiles2013dynamical}, shortest path finding \cite{paper27_pmes2018}, Markovian random walks \cite{severa2018spiking} and general scientific computing \cite{severa2016spiking}.  

Specialized processors optimized to implemented spiking neurons and spike-based algorithms are known as spiking neuromorphic computing systems (SNCs) \cite{schuman2017survey}.  Many platforms are under development (e.g., IBM TrueNorth \cite{merolla2014million}, Intel Loihi \cite{davies2018loihi}, SpiNNaker \cite{painkras2013spinnaker}, non-volatile memory devices \cite{burr2017neuromorphic}) with different capabilities \cite{indiveri2015memory} but similar strengths such as low communication costs. 

This paper introduces mathematical proofs of spike-based primitives for graphical algorithms as well as examples of hardware agnostic implementations.  The routines in this paper are algorithms for either extraction, enumeration or verification.  Extraction algorithms are used to retrieve graph elements: nearest neighbors of a vertex, the neighborhood graph of a vertex, single source shortest path, and single source shortest cycle.  Enumeration algorithms count graphical elements (such as triangles) in a graph.  Verification algorithms are applied to a subset of neurons and return a Boolean variable (for example testing if a subgraph is a clique). Motivated in part by the GraphBLAS (Basic Linear Algebra Subprograms) library (which introduced sparse matrix-based primitives for graph algorithms) \cite{mattson2013standards,kepner2016mathematical,buluc2017design} we aim to stimulate research interest in the design of graph algorithms for event-based computation.

By definition spikes are sent from a neuron simultaneously along all outgoing synapses, and transmitting information through the isotropic firing patterns of spiking neurons is the true advantage of using event-based computation.  However fully leveraging this inherently parallel behavior may require the use of synapse plasticity.  As not all neuromorphic computing platforms can fully implement synapse plasticity, we present several examples of routines that use iterative execution (usually used when the implementation does not implement synaptic plasticity), or through parallelized algorithms (usually used when the implementation does include synaptic plasticity).  It is worth noting that the relative threshold values, synapse weights, delays and refractory periods will be dependent on individual hardware capabilities, but we focus on presenting our routines with general guidelines showing how spiking neurons can extract and implement each routine.

General definitions for graphs, spiking neurons, and complexity of event-driven computation are given in \Cref{sec:defs}.  Our mathematical results are organized into three sections (and summarized in \Cref{tab:methods_summary}): extraction in \Cref{sec:extraction}, enumeration in \Cref{sec:triangles} and verification in \Cref{sec:verification}. The maximum clock time (MCT) to execute each method listed in the third column is an upper bound on the number of clock steps needed to simulate the spiking neurons.  We use the ratio ($v_{th}/s_w$) of spike threshold to synaptic weight as an unique network characteristic and for routines that require multiple steps this value is listed for each step. Instantiating a set of spiking neurons, or reading the current neuron states can add significant computational overhead and are dependent on the choice of hardware.  For these steps we report the number of times the current state of a system needs to be measured (\textit{Reads}) and the number of times a system needs to be instantiated (\textit{Writes}).  
\begin{table}[htbp]
\caption{Summary of Extraction (Ex.), Enumeration (En.) and Verification (V.) routines}
\centering
\begin{tabular}{| l | c | c | c | c | c |}
\hline
Method & Synapse & MCT & $v_{th}/s_{w}$ & Reads & Writes \T\B\\
\hline
Nearest neighbor (Ex.) & Static & 1 & 1 & 0 & 1\\
Eccentricity (Ex.) & Static & $N$ & 1 & 0 & 1\\
Subgraph (Ex.) & Static & $N+1$ & 1 & 0 & 2\\
Subgraph (Ex.) & Plastic & 2 & 1 & 1 & 2\\
\hline
Triangle (En.) (edge) & Static & 1 & 2 & 0 & 1\\
Triangle (En.) (vertex) & Static & $d+1$ & 1,2 & 0 & $d+1$ \\
Triangle (En.) (clique)  & Static & $\binom{d}{2} + 1$ & 1,2 & 0 & $\binom{d}{2}+1$\B\\ 
\hline
Clique (V.) & Static & 1 & $(n-1)$ & 0 & 1 \\
Clique (V.) & Plastic & 1 & $(n-1)$ & 1 & 1\\
\hline \end{tabular}
\label{tab:methods_summary}
\end{table}

\section{Definitions}
\label{sec:defs}
In this section we introduce define relevant graph quantities, neuronal and synaptic parameters, and define how a graph is mapped onto spiking neurons.  

\subsection{Graph definitions}
An undirected graph $\mathcal{G}(V,E)$ is defined by a vertex set $V(\mathcal{G})$ and an edge set $E(\mathcal{G})$. A directed graph $\mathcal{D}(V,E)$ is defined by a vertex set $V(\mathcal{D})$ and a set of directed arcs $E(\mathcal{D})$.  The notation $E$ can refer to either a set of undirected edges or directed arcs, but directed edges are labeled by the arc direction $e_{i \rightarrow j} \neq e_{j \rightarrow i}$ while undirected edges are labeled by the two terminal nodes $e_{ij} = e_{ji}$.  We convert unweighted graphs into weighted graphs by assigning each $e \in E$ a length of 1.  To avoid confusion with synaptic weights, we refer to graph edges as having lengths rather than weights.  Finally, throughout this work we only consider graphs that are connected and do not contain self-loops.

\subsection{Spiking neuron system definitions}
We employ spiking neuron systems (SNS) $\mathcal{S}(n_{\mathbf{l}},s_{\mathbf{k}})$ similar to those described in \cite{schuman2017simulating}: they are are defined by a set of nonlinear, deterministic neurons $n_{\mathbf{l}}$ and a set of synapses $s_{\mathbf{k}}$.  An SNS is a connected graph that is defined by a neuron set $N_{\mathbf{l}}(\mathcal{S})$ and a synapse set $W_{\mathbf{k}}(\mathcal{S})$.  Here, we assume that the neurons are deterministic and the spiking behavior is defined by the internal state of each neuron, quantified by a time-dependent electrical potential.  

The neurons are parameterized by the spike threshold and the refractory period $\mathbf{l}=\lbrace v_{th},t_R\rbrace$  (respectively).  When a neuron fires, it can be classified as ``external'' or ``internal'' firing; external firing is caused by an external stimulus applied to a neuron,
internal firing is cause by multiple spikes arriving from other neurons.  The synaptic edges are parameterized by weight, delay, and plasticity $\mathbf{k} = \lbrace s_w, \delta, \alpha \rbrace$ (respectively). We parameterize plasticity by the learning rate $\alpha$.  Static synapses cannot learn and are identified with $\alpha=0$.  If $\alpha \neq 0$, then the synapses are plastic, but further detail about the learning rule must be specified.  We present algorithms throughout this paper that can be implemented with either static or plastic synapses, and require that plastic synapses can realize a form of spike timing dependent plasticity (STDP).  

For all of the algorithms presented in this paper, the SNS are constructed from a given graph ($\mathcal{G}$ or $\mathcal{D}$) via \textit{direct mapping}.  
\begin{definition}[Direct Mapping]\label{def:direct_mapping}
 A graph ($\mathcal{G}$ or $\mathcal{D}$) is directly mapped to an SNS by defining a neuron $n_i \in n$ for each $v_i \in V$  and a synapse or pair of synapses for each edge $e \in E$.  Directed arcs are mapped to directed synapses, and undirected edges are mapped to symmetric pairs of synapses $e_{ij} \to (s_{i\to j},s_{j\to i})$.
\end{definition}
We assume that the graphs can be embedded directly into an SNS such that there are sufficient neurons and synapses and connectivity between them to realize the graph in the SNS. In future work, we plan to investigate adapting these algorithms to neuromorphic systems that cannot realize the graph in an SNS due to connectivity restrictions. Direct mapping preserves the connectivity of the original graph, and in particular ensures that graph elements such as triangles and cycles are also present in the SNS.

\begin{definition}[Triangle]\label{def:triangle} A triangle on an undirected graph is defined by an unordered tuple of vertices $(v_i,v_j,v_k) \in V(\mathcal{G})$ where $(e_{ij},e_{jk},e_{ik}) \in E(\mathcal{G})$.  Under direct mapping, any triangle $(v_i,v_j,v_k)$ will map to a tuple of neurons $(n_i,n_j,n_k) \in \mathcal{S}(N,W)$.
\end{definition}

Information is extracted from an SNS by analyzing the spike behavior of individual neurons, by measuring the time dependent synaptic weights in the system, or both.  The overall spiking behavior is dependent on the neuronal and synaptic parameters in the SNS.  By changing these parameters we can implement different algorithms on the same set of neurons and synapses.  

\begin{figure}[htbp]
  \centering
  \includegraphics[width=0.5\textwidth]{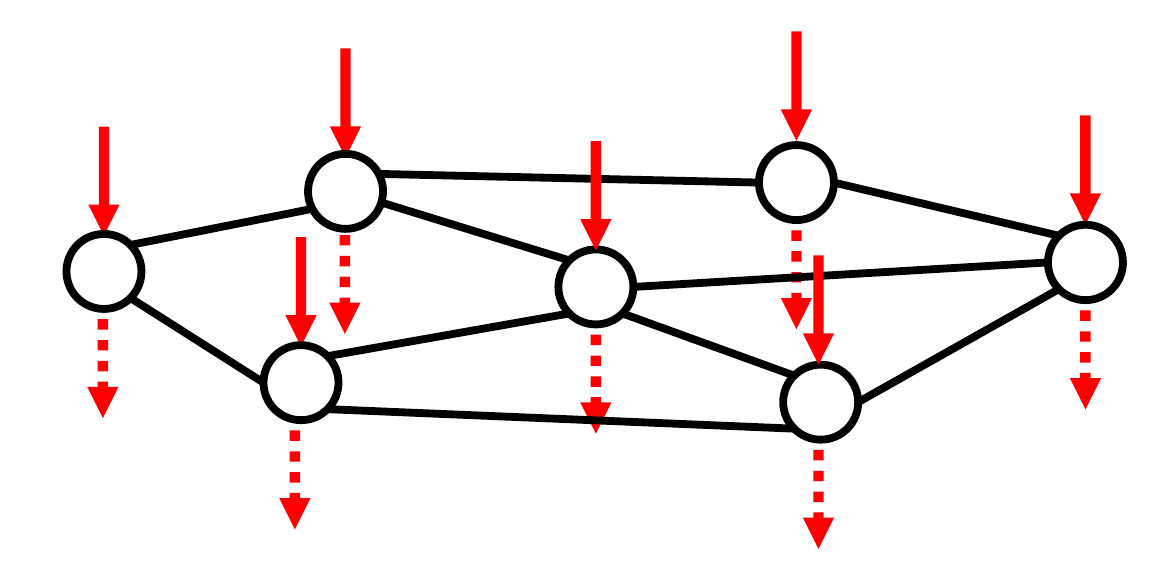}
  \caption{Schematic showing how external stimuli can be applied to neurons of an SNS. Discrete spikes are fed to a neuron via an external source along directed arcs (solid red arrow) with synaptic weights defined to be greater than the neuron firing threshold.  The arrival of an external spike at a neuron will immediately cause it to internally fire.}
  \label{fig:general_SNS_IO}
\end{figure}
For the graph algorithms described in this work, synaptic plasticity plays the role of an indicator function for quick identification of which neurons caused subsequent firing in the SNS.  A simple demonstration of STDP is shown in Fig. \ref{fig:simple_STDP}. Since STDP is dependent on the response of neurons to arrival spikes, multiple time steps are needed in order to fully observe the effects of spikes travelling through the system.
\begin{figure}[htbp]
  \centering
  \includegraphics[width=0.7\textwidth]{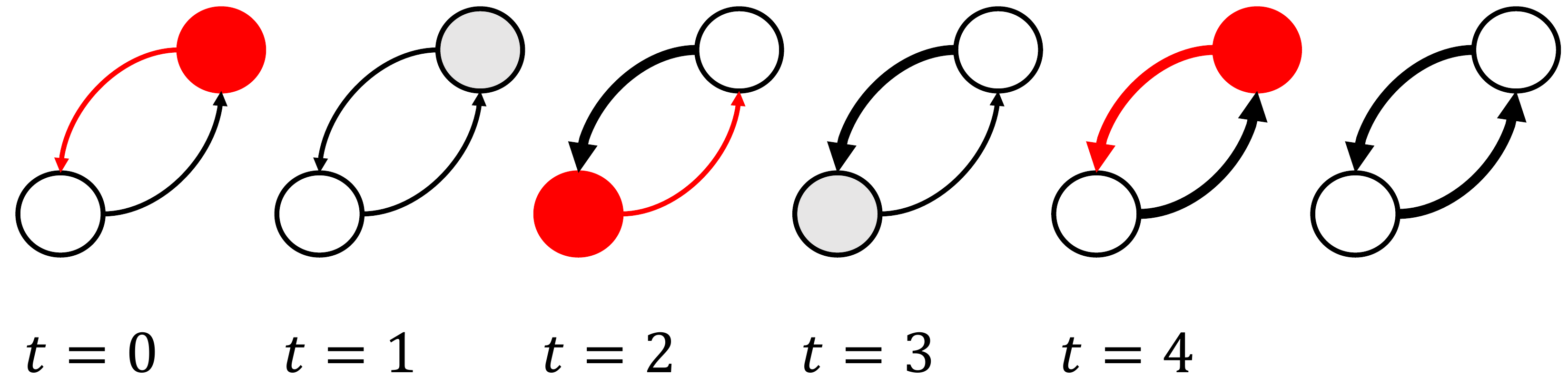}
  \caption{1-step STDP in $\mathcal{S}(N_{v_{th}=1,t_R=1},W_{s_w=1,\delta=2})$, active neurons and synapses are red, inactive neurons and synapses are outlined in black, and neurons in a refractory period are grey.  At $t=0$ the upper neuron (red) fires spikes along its outgoing synapses (red), then at $t=1$  enters a refractory period (grey). The delay $\delta=2$ prevents the spike from arriving at the bottom neuron until $t=2$, at which point the bottom neuron fires (red) and the synapse which carried the spike from the upper neuron is strengthened (shown by increased line width).  The process is repeated, starting from the bottom neuron for time steps $t=2,3,4$.  If the weights are read out at this point, both synapses have potentiated (increased in weight).}
  \label{fig:simple_STDP}
\end{figure} 

\subsection{Neuron driving}
\label{sec:building_blocks}
In addition to the connections defined from the graph mapping, each neuron has an external set of directed edges that allow for the application of stimuli (represented by a directed arc that terminates at a neuron) and a directed edge that allows for the recording of when a neuron fires a spike.  These directed connections are defined with positive synaptic weights that exceed the firing threshold of a neuron, ensuring that a spike that is directed along the input connection will drive a neuron to internally fire.  A cartoon showing the general input and output connections to an SNS is shown in \cref{fig:general_SNS_IO}.  We implement spike-based graph algorithms using the spiking dynamics generated by applying external stimuli to a single neuron (single neuron driving) or stimuli simultaneously to multiple neurons (multiple neuron driving).

To implement single neuron driving a SNS $\mathcal{S}$ is constructed from a graph $\mathcal{G}$, then external stimuli are applied to a specific neuron $n_i \in N(\mathcal{S})$.  Single neuron driving can also be used to extract the nearest neighbors of a neuron, and distance characteristics of a graph (e.g. vertex eccentricity, graph radius and diameter) (see \Cref{sec:extraction}).

For multiple neuron driving, if $m$ neurons are driven the maximum number of spikes that can arrive at a (non-driven) neuron within one time step is $m$.  If the SNS is defined as $\mathcal{S}(N_{v_{th}=1},W_{s_w=1,\alpha=0})$ then the use of multiple neuron driving can be used to identify within $1$ time step all neurons that are within $1$ edge of any of the neurons being driven, but the spike raster alone is insufficient to distinguish which neurons are closest to which driven neuron. By introducing plasticity and a simple spike timing dependent learning rule, then the nearest neighbors of each driven neuron can be identified from the synapses that have potentiated and have $s_w>1$.

When a neuron fires a spike, that spike is transmitted along all its synapses; this results in spikes simultaneously arriving at multiple target neurons.  Multiple neuron driving forms the building block of many parallelized algorithms, and can be implemented by sending external stimuli to a set of neurons, with no information about their interconnections, or we can send external stimuli to neurons which are known to be connected by a synapse.  Simply increasing the number of neurons receiving external stimuli is not guaranteed to result in gaining more information from a graph (see \Cref{fig:multiple_neuron_driving_IO}). By incorporating more complex neuron structures (heterogeneous spike thresholds, synapse plasticity) it is possible to design parallel implementations of algorithms or extract higher order correlated graph elements such as triangles.  For example, with single neuron driving the maximum number of spikes that can arrive at a (non-driven) neuron within one time step is $1$.  

\begin{figure}[htbp]
  \centering
  \includegraphics[width=0.7\textwidth]{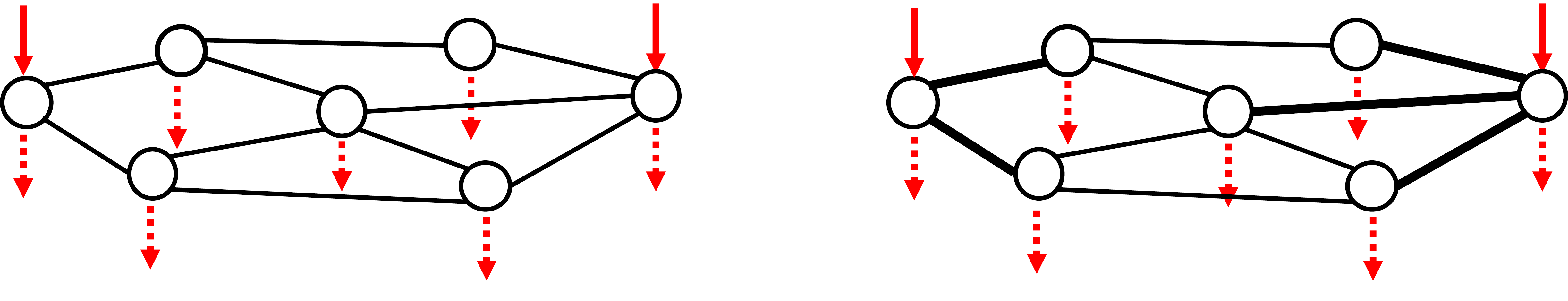}
  \caption{[Left] Schematic showing external stimulus applied to a multiple neurons in a system without synapse plasticity and neurons that internally fire within $1$ time step. [Right] Schematic showing neurons that internally fire within $1$ time step and how that affects individual synapse weights.}
  \label{fig:multiple_neuron_driving_IO}
\end{figure} 

\subsection{Computational complexity}
\label{sec:complexity_summary}
Computational complexity can be defined as the number of steps needed to execute a program.  Our approach to designing algorithmic primitives for spiking neurons relies on an expanded concept of complexity.  Rather than defining it in terms of the complexity of the input spikes or the time to execute a program, we define the computational complexity of a spike-based algorithm as the number of time steps needed to execute a program and the number of unique spiking neuron systems that have to be constructed and executed.  There are tasks for non-spiking computational workloads that can be identified as ``embarrassingly parallel,'' for which dividing a single task among $n$ threads can lead to significant speed up due to the fact that each individual partition can be executed with minimal computational costs.  A major advantage of spike-based algorithms is the isotropic firing pattern of individual neurons which leads to a degree of parallelization in all spiking algorithms.  In terms of neural algorithms, there are tasks that are strictly serial and those that are parallelized.  Throughout this work we refer to algorithms that rely on driving a single neuron during a time step as ``serial'' programs, while algorithms that utilize multiple neuron driving are referred to as ``parallel'' programs. Algorithms (e.g., subgraph extraction) that are ``embarrassingly parallel'' can be executed in significantly fewer time steps when they are implemented with multiple neuron rather than single neuron driving. 

There is an additional component in the complexity of these algorithms, which is writing and reading network information to and from the neuromorphic system.  ``Writing" the network corresponds to configuring the network topology and setting the appropriate parameters (synaptic weights and delays, neuronal thresholds) on the neuromoprhic system.  ``Reading" corresponds to reading out the updated network from the neuromorphic system after activity has occurred. Reading a network only makes sense if plasticity occurs in the neuromorphic system, because plasticity will change the network's parameters; if there is no plasticity, the network remained unchanged due to behavior. Because writing and reading networks is a non-trivial component of real execution time on neuromorphic and because the way it is implemented (and thus how much time it takes) is implementation-specific, for each of the algorithms we simply specify the number of network reads and writes required to execute that algorithm.

\section{Extraction algorithms}
\label{sec:extraction}
GraphBLAS routines are built from the adjacency and incidence matrices of a graph.  For example, the neighbors of vertex $v_i \in V(\mathcal{G})$ can be found using the equation $\overline{\mathbf{y}} = A_{\mathcal{G}}\overline{\mathbf{x}}$ where the adjacency matrix of the graph $A_{\mathcal{G}}$ is multiplied with a binary vector $\overline{\mathbf{x}}$ with $\mathbf{x}[i]=1$ the only non-zero value.  The resultant vector $\overline{\mathbf{y}}$ is a binary vector with non-zero entries $\mathbf{y}[j]=1$ where $j$ corresponds to the neighboring vertices.  

Rather than implement matrix multiplication with spiking neurons, we use the spiking behavior to transmit equivalent information.  For the example of nearest neighbor identification, this is done by defining a homogeneous SNS with static synapses $\mathcal{S}(N_{v_{th}=1},W_{s_w=1,\alpha=0})$ constructed from a graph $\mathcal{G}$ via direct mapping (\Cref{def:direct_mapping}). By applying an external stimulus to a single neuron (analogous to defining the vector $\overline{\mathbf{x}}$) we return a response from only the nearest neighbors (analogous to the vector $\overline{\mathbf{y}}$).

\begin{theorem}[Nearest neighbor extraction]\label{th:nearest_neighbors}
Directly map an undirected graph $\mathcal{G}$ to the SNS $\mathcal{S}(N_{v_{th}=1},W_{s_w=1,\alpha=0})$.  When an external stimuli is applied to a single neuron $n_i$ and allowed to propagate for a single time step, internal spiking can only occur in the nearest neighbors of $n_i$. 
\end{theorem}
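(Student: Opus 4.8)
The plan is to trace the spike activity generated by the external stimulus for exactly one time step and to read off which neurons can cross threshold, using the edge-to-synapse dictionary supplied by direct mapping (\Cref{def:direct_mapping}).

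First I would fix the initial condition. By the construction of the input connections in \Cref{sec:building_blocks}, the external synapse into $n_i$ carries weight exceeding $v_{th}$, so the stimulus drives $n_i$ to fire externally at the outset; this is the only firing event not caused by spikes arriving from other neurons. The key observation is that before this firing no intrinsic spike exists anywhere in the SNS, so $n_i$ is the unique source of spikes during the time step under consideration.

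Next I would describe emission and propagation. When $n_i$ fires it emits an identical spike along every one of its outgoing synapses simultaneously (isotropic firing). Under direct mapping the outgoing synapses of $n_i$ are exactly $\{ s_{i\to j} : e_{ij} \in E(\mathcal{G}) \}$, one per edge incident to $v_i$, so the set of neurons that can receive any spike within one time step is precisely the graph-neighbors of $v_i$. Each such neighbor $n_j$ receives exactly one spike (the graph is simple, with no self-loops or multi-edges), contributing potential $s_w = 1 = v_{th}$, so $n_j$ meets threshold and fires internally.

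Finally I would rule out everything else. For any neuron $n_k$ with graph distance $d(v_i,v_k) \ge 2$ there is no synapse $s_{i\to k}$, so $n_k$ receives no spike during the first time step; since $n_i$ is the only spike source, the potential of $n_k$ stays below $v_{th}$ and it cannot fire internally. More generally, a spike reaching such an $n_k$ must traverse at least $d(v_i,v_k) \ge 2$ synapses, and since each synapse is crossed in at least one time step no distance-$\ge 2$ neuron can fire before $t=2$. Combining the three steps shows that the only internal firings within a single time step occur at the nearest neighbors of $n_i$, which is the claim. The main obstacle I anticipate is not the counting but pinning down the propagation model cleanly in this hardware-agnostic setting: I must state precisely that a spike advances across at most one synapse per time step, and keep the external-versus-internal firing bookkeeping straight so that $n_i$'s own external firing is correctly excluded while the neighbors' firings are genuinely internal.
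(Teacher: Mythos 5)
Your proposal is correct and takes essentially the same approach as the paper: both arguments trace the spike propagation from the single externally driven neuron $n_i$ for one time step and observe that, since $n_i$ is the only spike source and its outgoing synapses under direct mapping correspond exactly to the edges incident to $v_i$, only nearest neighbors can receive a spike and hence internally fire. The paper's proof is terser (``largely by inspection''), while you additionally verify that a received spike meets threshold ($s_w = v_{th} = 1$) and make the one-synapse-per-time-step propagation assumption explicit, but the core reasoning is identical.
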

\begin{proof}
The proof is done largely by inspection and relies on the isotropic firing patterns of spiking neurons.  Feeding external spikes to a single neuron ($n_i$) will cause ($n_i$) to fire spikes along all synapses that originate at $n_i$.  Within one time step, the only neurons that can internally fire are those that have received at least one spike, which is possible for neurons within $1$ edge of $n_i$.
\end{proof}
\begin{corollary}\label{cor:subgraph_edges_single}
If a neuron $n_j$ in $\mathcal{S}(N_{v_{th}=1},W_{s_w=1,\alpha=0})$ internally fires within one time step after $n_i$ externally fires, then the edge $e_{ij}$ must exist on the graph.
\end{corollary}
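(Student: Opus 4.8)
The plan is to read the statement as the converse of \Cref{th:nearest_neighbors} and to establish it by tracing the firing of $n_j$ backward to the existence of a synapse, and then backward again to the existence of an edge. \Cref{th:nearest_neighbors} constrains \emph{where} internal firing can occur after $n_i$ is driven; the corollary instead starts from the observed \emph{fact} that $n_j$ fired and reads off a necessary structural condition on $\mathcal{G}$. So I would assume that $n_j$ internally fires within one time step of $n_i$ externally firing and argue that the only way this can happen forces $e_{ij} \in E(\mathcal{G})$.

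The key steps are short. Since $n_j$ fires \emph{internally}, by the internal/external firing distinction its spike is produced by spikes arriving from other neurons, not by an external stimulus. Because $n_i$ is the only externally driven neuron, every spike present during this single time step originates at $n_i$ and propagates along a synapse whose source is $n_i$. For any such spike to reach $n_j$, a synapse $s_{i \to j}$ must belong to the synapse set of $\mathcal{S}$. By \Cref{def:direct_mapping}, synapses are created only from edges of the graph, with each undirected edge $e_{ij}$ mapping to the symmetric pair $(s_{i \to j}, s_{j \to i})$ and no synapses introduced otherwise; hence the presence of $s_{i \to j}$ implies $e_{ij} \in E(\mathcal{G})$. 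I would also note that the homogeneous parameters $v_{th} = s_w = 1$ make this deduction tight: a single arriving spike already meets the threshold, so $n_j$ firing is consistent with its having received exactly one spike along $s_{i \to j}$, and no accumulation of several spikes is required.

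The main obstacle is bookkeeping rather than analysis. I must use the internal/external distinction carefully to rule out the degenerate case in which $n_j$ fires for a reason unrelated to $n_i$, and I must confirm that within one time step no spike can have traversed two synapses, so that $n_j$ cannot have fired via an intermediate neighbor $n_k$ along a two-edge path. Both points follow immediately from the single-time-step restriction and from $n_i$ being the unique externally driven neuron, so I expect the verification to be brief.
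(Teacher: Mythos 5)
Your proposal is correct and matches the paper's reasoning: the paper gives no separate proof of this corollary, treating it as the immediate contrapositive-style consequence of \Cref{th:nearest_neighbors}, whose proof rests on exactly the points you make (within one time step a spike can traverse only one synapse, and under \Cref{def:direct_mapping} synapses exist only where edges do). Your write-up simply makes explicit the backward trace from observed firing to synapse to edge that the paper leaves implicit.
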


If the amount of time that spikes are allowed to propagate through $\mathcal{S}$ is increased, then single neuron driving can also be used to identify all neurons reachable by at least a finite number of non-backtracking steps from the driven neuron.  This non-backtracking condition is strongly enforced by imposing a large refractory period on each neuron.  
\begin{corollary}\label{cor:spike_ball}
If an SNS $\mathcal{S}(N_{v_{th}=1,t_R=|N(\mathcal{S})|},W_{s_w=1,\alpha=0})$ is defined from a graph $\mathcal{G}$, and single neuron driving applied to $n_i \in N(\mathcal{S})$, then after $r$ time steps to then all neurons that can be reached with at least $r$ non-backtracking steps will have fired at least once.
\end{corollary}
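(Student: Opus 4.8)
The plan is to run an induction on the clock step $t$, tracking the firing wave that emanates from $n_i$ and showing that every neuron first fires at exactly the time equal to its graph distance from $n_i$. First I would pin down the parameters the count depends on: each synapse has delay $\delta=1$, so a spike crosses one edge per step, and the combination $s_w=1$, $v_{th}=1$ means a single incoming spike is both necessary and sufficient to make a silent neuron fire. The base case $t=1$ is then precisely \Cref{th:nearest_neighbors}: after one step exactly the nearest neighbors of $n_i$ have fired.

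For the inductive step I would assume that at the end of step $t$ every neuron within graph distance $t$ of $n_i$ has fired and no others, with a neuron at distance exactly $t$ firing for the first time on step $t$. On step $t+1$ the neurons that fired on step $t$ emit spikes along all outgoing synapses, which arrive one edge away. A neighbor that is still silent receives at least one spike and therefore fires, while a neighbor that has already fired is held in its refractory period and cannot respond. This is exactly where the large refractory period $t_R=|N(\mathcal{S})|$ does its work: it guarantees that no already-activated neuron can re-fire and reflect the wave backward, so the only new firings on step $t+1$ occur at distance exactly $t+1$. Invoking the breadth-first-search recurrence $d_G(n_i,n_j)=1+\min_{k\sim j} d_G(n_i,n_k)$ for $n_j\neq n_i$ closes the induction, and after $r$ steps every neuron reachable from $n_i$ in at most $r$ non-backtracking steps has fired at least once.

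Two points need care, and I expect the second to be the main obstacle. The first is simultaneous arrivals: when several spikes reach a still-silent neuron in the same step it must fire exactly once, which is immediate from $v_{th}=s_w=1$, but I would state it explicitly so that graphs with multiple short paths cause no spurious behavior. The second, more delicate point is justifying that the refractory period is long enough across the entire window of interest. Since the wave reaches a neuron along a shortest path, every relevant first-firing time is bounded by the graph diameter, which is strictly smaller than $|N(\mathcal{S})|$; hence setting $t_R=|N(\mathcal{S})|$ keeps each neuron refractory from its first firing through step $r$, and it is this inequality that must be verified to guarantee the wave never folds back. Finally I would reconcile the non-backtracking language with the distance count by noting that a shortest walk between two neurons is always a simple path and therefore non-backtracking, so the set of neurons reachable within $r$ non-backtracking steps coincides with the distance-$r$ ball that the wave fills out.
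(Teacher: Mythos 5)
Your proposal is correct and takes essentially the same approach as the paper: the paper's proof is exactly this outward-wave argument (at $t=0$ the driven neuron fires and enters its refractory period, at $t=1$ its neighbors fire while returning spikes are blocked, and for $t>1$ only previously silent neurons can fire, so a neuron firing at $t=r$ lies at the end of an $r$-step non-backtracking path). Your version merely packages that argument as a formal induction on the time step and adds the bookkeeping the paper leaves implicit, namely that the refractory period $t_R=|N(\mathcal{S})|$ exceeds any first-firing time and that the set of neurons reachable by non-backtracking walks of length at most $r$ coincides with the distance-$r$ ball.
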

\begin{proof}
Define time step $t=0$ to be the time at which $n_i$ will fire a spike due to the external stimulus and then enter its refractory period.  At time step $t=1$, all nearest neighbors $\lbrace n^{\prime} \rbrace$ of $n_i$ will internally fire, however all spikes returning to $n_i$ will not induce further internal firing. At time step $t>1$ only neurons that have not internally fired in a previous time step will fire.  At time step $t=r$, if a neuron $n_j$ fires then it is connected to $n_i$ by a $r$-length non-backtracking path.  
\end{proof}
One result of \Cref{cor:spike_ball} is that spiking dynamics can be used to define an upper bound on the shortest path length between two vertices $(v_i,v_j)\in V(\mathcal{G})$.  If single neuron driving is applied to $n_i$, and $n_j$ fires within $r$ time steps then the shortest path connecting $|\mathcal{P}(v_i \to v_j)| \leq r$.  

A second result of \Cref{cor:spike_ball} is that spiking dynamics can be used to measure the eccentricity of a vertex $\epsilon(v_i) \in V(\mathcal{G})$.  The eccentricity of a vertex $\epsilon(v_i)$ is the longest shortest path $\mathcal{P}(v_i \to v_j)$ for any $v_j \in V(\mathcal{G})$ \cite{harary1969graph}.  The large refractory period $t_R=|N(\mathcal{S})|$ is defined to enforce the non-backtracking condition, but also ensures that the spike firing in $\mathcal{S}$ will terminate once every neuron has fired one spike. If single neuron driving is applied to a neuron $n_i$, and spikes are allowed to propagate until the firing pattern terminates at time step $R$, then the eccentricity $\epsilon(v_i) = R$.  For the case of $\mathcal{G}=K_N$, then $\epsilon(v_i)=1$ and the spiking in $\mathcal{S}$ will terminate after $1$ time step (MCT=1). For the case of $v_i$ being a terminal node of $\mathcal{G}=\mathcal{P}_N$, then the spiking in $\mathcal{S}$ will terminate after $N$ time steps (MCT=N).

Single neuron driving of $n_i$ can be used to identify nearest neighbors, or find an upper bound on the shortest path between two vertices, or define the eccentricity $\epsilon(v_i)$.  However with static synapses it will not identify the unique edges that exist in the neighborhood of $n_i$, or comprise $\mathcal{P}(v_i \to v_j)$.  These questions are related to subgraph extraction, which is described in the following section.

\subsection{General subgraph extraction}
\label{sec:subgraph}
In GraphBLAS, a subgraph is obtained from a larger graph by extracting columns and rows from the original graph adjacency matrix to form an adjacency matrix on a vertex subset.  We can extract a subgraph $\mathcal{G}^{\prime} \subseteq \mathcal{G}$ defined on a vertex subset $V^{\prime}$ using a two-step process.  The vertices of the subgraph are first identified, as the neuron subset $N^{\prime} \subseteq N(\mathcal{S})$.  All edges that exist on the subgraph can be identified from the spiking dynamics generated with single neuron driving on each $n_i \in \lbrace n^{\prime} \rbrace$ or by using multiple neuron driving for the entire subset $\lbrace n^{\prime} \rbrace$.

It cannot be assumed that all neighbors of a neuron are contained in the subgraph, and as a result external driving of a neuron must result in only a subset of nearest neighbor neurons internally firing.  This is achieved by first instantiating an SNS where the synapses are too weak to induce internal firing with a single spike (e.g. $v_{th}=|E(\mathcal{G})|, s_w=1.0$). Then, for the neurons in the subset $\lbrace n^{\prime} \rbrace$, the spike thresholds of these neurons are lowered to $v_{th}=s_w=1$.  Driving any neuron in the subset $N^{\prime}$ will cause spikes to be sent to all neighbors, but only those with lowered thresholds will internally fire.  The edges which should be added to the subgraph can be identified with single neuron driving and iterating over the neuron set $\lbrace n^{\prime} \rbrace$,  or they can be identified from synaptic weights when performing parallel subgraph extraction.

\begin{theorem}[Iterative subgraph extraction]\label{thm:subgraph_neurons_static}
If an SNS with static synapses is defined by directly mapping $\mathcal{G}$ to $\mathcal{S}(N,W)$, then it is possible to set the neuron parameters $l^{\prime}$ for $n_i \in \lbrace n^{\prime} \rbrace$ such that applying a stimulus to any neuron in the neuron subset $\lbrace n^{\prime} \rbrace$ will only generate a spike response from the remaining neurons in $\lbrace n^{\prime} \rbrace$.
\end{theorem}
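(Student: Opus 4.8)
The plan is to give a constructive proof: I would exhibit an explicit choice of the neuron parameters $l^{\prime}$ and then verify, by tracing a single step of single neuron driving, that the resulting spike response is confined to $\lbrace n^{\prime} \rbrace$. The key tool is the isotropic firing property already established in \Cref{th:nearest_neighbors}, together with the observation that a neuron's internal firing depends only on whether its \emph{own} accumulated potential reaches its \emph{own} threshold. Because the synapses are static ($\alpha = 0$), the only free parameters available are the thresholds, so the entire argument reduces to choosing thresholds that discriminate subset from non-subset neurons.

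First I would instantiate the SNS homogeneously with synaptic weight $s_w = 1$, learning rate $\alpha = 0$, and a baseline threshold high enough that a single incoming spike can never induce firing; any $v_{th} > 1$ works, and the over-estimate $v_{th} = |E(\mathcal{G})|$ suffices. Then, for every neuron in the target subset $\lbrace n^{\prime} \rbrace$, I lower the threshold to $v_{th} = 1 = s_w$; this defines the modified parameter set $l^{\prime}$. Under this assignment every neuron in $\lbrace n^{\prime} \rbrace$ fires upon receiving a single spike, while every neuron outside $\lbrace n^{\prime} \rbrace$ remains subthreshold after one spike.

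Next I would trace the dynamics. Applying an external stimulus to an arbitrary $n_i \in \lbrace n^{\prime} \rbrace$ causes $n_i$ to fire and, by \Cref{th:nearest_neighbors}, to emit exactly one spike along each synapse to its graph neighbors. After one time step every neighbor has accumulated a potential equal to $s_w = 1$. A neighbor in $\lbrace n^{\prime} \rbrace$ meets its lowered threshold and internally fires, whereas a neighbor outside $\lbrace n^{\prime} \rbrace$ sits at potential $1 < v_{th}$ and does not fire. Hence the only neurons that respond are the neighbors of $n_i$ lying in $\lbrace n^{\prime} \rbrace$, which is exactly the claim.

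The main obstacle to watch is guaranteeing that no neuron outside $\lbrace n^{\prime} \rbrace$ can ever be pushed over threshold. In the single-neuron-driving, one-step regime this is immediate, since each neighbor receives at most one spike; the choice $v_{th} = |E(\mathcal{G})|$ provides a comfortable margin because no neuron can receive more spikes in a single step than its degree, and a degree never exceeds $|E(\mathcal{G})|$. The only care required is to fix the observation window at a single propagation step and to iterate the external driving over $\lbrace n^{\prime} \rbrace$ one neuron at a time, so that spikes emitted by the responding subset neurons are never allowed to accumulate at an outside neuron in a later step.
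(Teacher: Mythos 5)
Your proposal is correct and follows essentially the same constructive route as the paper's proof: instantiate the SNS with a uniform threshold exceeding the synaptic weight (the paper uses $v_{th}=2$, you use the over-estimate $v_{th}=|E(\mathcal{G})|$ mentioned in the paper's preceding discussion), lower the thresholds of the subset $\lbrace n^{\prime} \rbrace$ to $v_{th}=s_w=1$, and apply single neuron driving for one time step so that only subset neighbors can internally fire. Your added caution about limiting the observation window to a single step and iterating over the subset matches the paper's ``$t=1$ time steps'' condition, so there is no substantive difference.
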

\begin{proof} Instantiate $\mathcal{S}(N_{v_{th}=2},W_{s_w=1,\alpha=0})$ an SNS with spike threshold for all neurons $v_{th}>s_w$, if the spike threshold for a subset of neurons $N^{\prime}$ is reduced such that $v_{th} = s_w$ then applying single neuron driving for $t=1$ time steps to $n_i \in \lbrace n^{\prime} \rbrace$ will only induce internal firing in the remaining neurons of $\lbrace n^{\prime} \rbrace \setminus n_i$. 
\end{proof}
Using the result from \cref{cor:subgraph_edges_single} the edges of a subgraph can be extracted from a larger graph through iterative application of single neuron driving.  

A parallelized implementation of subgraph extraction requires multiple neuron driving and synaptic plasticity.  
\begin{theorem}[Parallel subgraph extraction]\label{cor:subgraph_edges_multi}
If an undirected graph $\mathcal{G}$ is directly mapped to an SNS $\mathcal{S}(N_{v_{th}=|E|},W_{s_w=1,\delta=1,\alpha=0.5})$ with synaptic learning defined by one-step STDP, then the neuron subset $n^{\prime}$ which defines a subgraph of interest can be extracted by reducing the firing threshold for $n_i \in n^{\prime}$ to $v_{th}=1$.  After $2$ time steps the edges of a subgraph are the synapses that have potentiated $s_w^{(1)}>s_w^{(0)}$.
\end{theorem}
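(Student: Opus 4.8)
The plan is to follow the spike dynamics of $\mathcal{S}$ across the two-step window and show that the one-step STDP rule flags exactly the synapses joining two members of $n'$, which under direct mapping (\Cref{def:direct_mapping}) are precisely the edges of the subgraph induced on $V'$. Conceptually this is the \emph{parallel} counterpart of \Cref{thm:subgraph_neurons_static}: the iteration over single-neuron drives is replaced by a single \emph{multiple}-neuron drive of all of $n'$ at once, and the read-out shifts from the spike raster to the set of potentiated synapses.

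First I would fix the threshold gap exactly as in \Cref{thm:subgraph_neurons_static}: keep $v_{th}=|E|$ globally and lower $v_{th}=1$ only for the neurons of $n'$. At $t=0$ I drive all of $n'$ simultaneously, so each $n_i\in n'$ fires externally and emits a spike along every outgoing synapse; because $\delta=1$ these spikes all arrive at their targets during the first step. I would then determine the firings at $t=1$: a target $n_j$ receives one spike per neighbor lying in $n'$. If $n_j\notin n'$ its threshold $|E|$ cannot be met, since the number of incoming spikes is bounded by $\deg(n_j)$; if $n_j\in n'$ a single arriving spike meets $v_{th}=1=s_w$, so $n_j$ fires internally exactly when it has a neighbor in $n'$. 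This is the multiple-driving analogue of \Cref{th:nearest_neighbors} and \cref{cor:subgraph_edges_single}, restricted to $n'$ by the threshold gap.

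Next I would apply the one-step learning rule illustrated in \cref{fig:simple_STDP}. A directed synapse $s_{i\to j}$ potentiates precisely when its presynaptic neuron $n_i$ fires (here at $t=0$) and the spike it carries (arriving at $t=1$) coincides with a firing of its postsynaptic neuron $n_j$ (at $t=1$). By the preceding step this happens if and only if $n_i\in n'$, $n_j\in n'$, and the synapse $s_{i\to j}$ exists, i.e.\ $e_{ij}\in E(\mathcal{G})$. Since an undirected edge maps to a symmetric pair, both $s_{i\to j}$ and $s_{j\to i}$ strengthen together, so after the two steps the synapses with $s_w^{(1)}>s_w^{(0)}$ are exactly the edges of the induced subgraph on $V'$, which is the claim.

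The step I expect to be the main obstacle is the interaction between the external firing at $t=0$ and the refractory state at $t=1$. STDP potentiation requires each $n_j\in n'$ to fire \emph{again} internally at $t=1$ in response to the arriving spike, so I must check that the refractory period is short enough (e.g.\ $t_R=0$, unlike the $t_R=1$ used in \cref{fig:simple_STDP}) to permit this re-firing, while confirming that letting the system run to $t=2$ introduces no spurious potentiation: a non-subset neuron never fires, hence is never a pre- or post-event, so no synapse touching it can strengthen. A secondary point to pin down is the threshold margin, since a star-center neuron in a tree can receive up to $|E|=N-1$ spikes at once; I would verify the firing convention (or slightly enlarge $v_{th}$ past $|E|$) so that no non-subset neuron ever reaches threshold in this boundary case.
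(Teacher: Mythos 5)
Your proposal follows essentially the same approach as the paper's (much terser) proof: heterogeneous firing thresholds confine all spiking to the subset $n^{\prime}$, and one-step STDP with the nonzero delay then potentiates exactly the synapses joining two members of $n^{\prime}$, which under direct mapping are the subgraph's edges. The two caveats you flag are genuine details the paper's proof silently glosses over — the refractory period must be shorter than the delay $\delta=1$ so that neurons in $n^{\prime}$ can re-fire at $t=1$ when the spikes arrive, and the non-subset threshold should strictly exceed the maximum attainable input (the paper itself uses $v_{th}=|E|+1$ in its other constructions) — but they refine rather than change the argument.
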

\begin{proof}
The heterogeneous firing thresholds ensure that multiple neuron driving applied to the entire set $n^{\prime}$ will only cause internal firing in the neurons $n_i \in n^{\prime}$.  To ensure that all edges can be identified by potentiation, a nonzero synaptic delay is needed (as shown in \Cref{fig:simple_STDP}) and the edges of the subgraph are identified with one-step STDP \cite{gerstner2002spiking},
\begin{equation}
s_w^{t+1}=s_w^{t}+\alpha \eta_j\eta_i.
\end{equation}
\end{proof}
In the following example, the neighborhood graph is extracted with spiking neurons.  This example has been implemented on a simulated memristive platform in Ref. \cite{paper27_pmes2018}.
\subsection*{Example: neighborhood extraction}
\label{sec:neighborhood_graph}
A neighborhood of vertex $v_s$ in a graph $G$, $N_G(v_s)$, is defined as the subgraph made up of all vertices $v_j \in V(G)$ such that $e_{i,j} \in E(G)$ and the set of all edges that connect the vertices in this set.  To extract $N_G(v_s)$, we configure the corresponding SNS with the following parameters: $(v_{th}=1, t_R=1)$ for all neurons, and $(\delta = 2, s_w = 1)$ for all synapses.  

First the vertices of $N_G(v_s)$ are identified, then the edges are identified.  To identify the vertices, we apply single neuron driving to $n_s$ at time step 0 and simulate the network for exactly two time steps (due to the delay $\delta=2$).  The neurons that fire in those two time steps will be $\lbrace n^{\prime} \cup n_s \rbrace$, the set of neurons that are adjacent to $n_s$ and $n_s$, which correspond directly to the vertices in our desired subgraph.  To identify the edges, we increase the spike threshold for $\lbrace n \rbrace \setminus \lbrace n^{\prime} \cup n_s \rbrace$ to $|E(G)|+1$.  We reset the simulation, load the new graph with updated thresholds and $s_w=1$, and simultaneously stimulate the neuron set $\lbrace n^{\prime} \cup n_s \rbrace$.  After two time steps we read all out all synapses that have $s_w > 1$ (the original weight) to obtain the edges that have potentiated and thus are edges of $N_G(v_s)$. 

\section{Enumeration algorithms}
\label{sec:triangles}
In this section we derive methods for counting triangles in subgraphs, based on Cohen's triangle enumeration algorithm for undirected graphs \cite{cohen2009graph}.  First, we describe how to count all triangles that contain a given edge $e_{ij}$.  Then, we show how to count the number of triangles that contain a given vertex $v_i$.  These methods rely on single and multiple neuron driving and can all be executed using only static synapses.  

The SNS in this section have higher spike thresholds than in \Cref{sec:extraction} and unit synaptic weight.  With multiple neuron driving and higher spike thresholds, it is possible to identify correlated vertexes on the graph (triangles) as shown in Fig. \ref{fig:multiple_neuron_driving_triangle_IO}.  

\begin{figure}[htbp]
  \centering
  \includegraphics[width=0.7\textwidth]{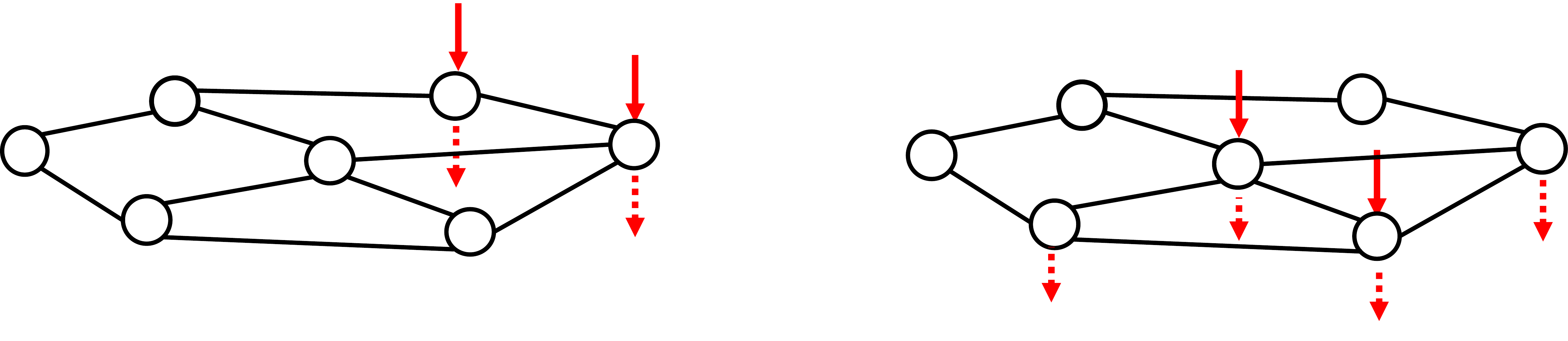}
  \caption{For $\mathcal{S}(N_{v_{th}=2},W_{s_w=1})$ [Left] External stimuli applied to two neurons connected by an edge which is not contained in any triangles on the graph does not generate any internal firing in $\mathcal{S}$. [Right] External stimuli applied to two neurons connected by an edge which is contained in two triangles on the graph can cause internal firing in $\mathcal{S}$.}
  \label{fig:multiple_neuron_driving_triangle_IO}
\end{figure} 

On a given graph, the neighborhood of a single vertex $N(v_0)$ is the induced subgraph on $v_0 \cup \lbrace v^{\prime} \rbrace$, where $\lbrace v^{\prime} \rbrace$ are all vertices that are connected to $v_0$ by an edge.  We extend this concept to a localized, correlated region containing a single edge and define a ``triangle neighborhood'' as follows:
\begin{definition}[Triangle Neighborhood]\label{def:triangle_neighborhood}
The triangle neighborhood $N_{\bigtriangleup}(e_{ij})$ is the induced subgraph of $\mathcal{G}$ on the vertices $v_i,v_j$ and all vertices $\lbrace v_k \rbrace$ that form triangles $(i,j,k)$ with the edge $e_{ij}$.
\end{definition}

From \cref{def:triangle_neighborhood,def:triangle}, and the nearest neighbor finding of \Cref{th:nearest_neighbors}, triangle enumeration is defined in two ways: edge-wise or vertex-wise.  To count all triangles that contain a given edge $e_{ij}$, multiple neuron driving is applied to the neurons $n_i, n_j$ and triangles ($n_i,n_j,n_k$) are identified by the neurons $n_k$ that fire. To count all triangles that contain a given vertex $v_i$ single neuron driving is applied to the neuron $n_i$ to identify all open wedges. With multiple neuron driving we can identify which wedges are closed (i.e., triangles) by choosing two edges $e_{ij},e_{ik}$ determining if $e_{jk} \in E(\mathcal{G})$.  An alternative vertex-wise triangle enumeration is presented which is similar to the clique verification methods descried in \Cref{sec:verification}.

\begin{theorem}[Triangle Enumeration for a single edge]
\label{thm:single_edge_triangle_count}
Construct an SNS $\mathcal{S}(N_{v_{th}=2},W_{s_w=1,\alpha=0})$ via a direct mapping from an undirected graph $\mathcal{G}$ and apply an external stimuli to two neurons $(n_i,n_j)$ which are connected by a synapse $s_{ij}$. Within one time step internal spiking can only occur in the neurons $\lbrace n_k \rbrace$ that form triangles $(n_i,n_j,n_k)$. 
\end{theorem}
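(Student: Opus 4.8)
The plan is to prove this by inspection in the same spirit as \Cref{th:nearest_neighbors}, the only new ingredient being the raised spike threshold $v_{th}=2$. The governing observation is arithmetic: since every synapse has weight $s_w=1$, a non-driven neuron increments its potential by exactly one unit per arriving spike, so it can cross the threshold $v_{th}=2$ and internally fire \emph{only} if it receives at least two spikes within the single time step. The theorem asserts a necessary condition (``internal spiking can only occur in $\lbrace n_k\rbrace$''), so it suffices to show that any neuron which internally fires must be a common neighbor of both $n_i$ and $n_j$; I do not need to verify that every such common neighbor actually fires.

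First I would fix the firing bookkeeping for the single time step. The external stimuli drive $n_i$ and $n_j$ to fire immediately (external firing), and by the isotropic firing property each of them transmits one spike along every outgoing synapse, i.e.\ to each of its graph neighbors under the direct mapping of \Cref{def:direct_mapping}. Because the direct mapping produces exactly one directed synapse per ordered adjacent pair, and the graph has no self-loops or multi-edges, $n_i$ delivers at most one spike to any given target, and likewise $n_j$. Within this window the only emitters are the two driven neurons, so the total input received by an arbitrary non-driven neuron $n_k$ equals the number of driven neurons adjacent to it.

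Next I would count that input explicitly: $n_k$ receives a spike from $n_i$ precisely when $e_{ik}\in E(\mathcal{G})$, and a spike from $n_j$ precisely when $e_{jk}\in E(\mathcal{G})$. Hence the potential accumulated at $n_k$ equals $s_w$ times the number of elements of $\lbrace n_i,n_j\rbrace$ adjacent to $n_k$, which is at most $2$ and equals $2$ only if $n_k$ is adjacent to both $n_i$ and $n_j$. Combining this with the threshold condition, any $n_k$ that internally fires must satisfy $e_{ik},e_{jk}\in E(\mathcal{G})$. Since $e_{ij}\in E(\mathcal{G})$ holds by hypothesis (the driven neurons are connected by the synapse $s_{ij}$), the triple $(v_i,v_j,v_k)$ meets \Cref{def:triangle}, so $n_k$ completes a triangle $(n_i,n_j,n_k)$. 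This establishes the claimed necessary condition.

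The step I expect to require the most care is the spike-counting bookkeeping within the single time step, namely confirming that no non-driven neuron can reach two accumulated spikes from any source other than the pair $n_i,n_j$. This hinges on two facts I would make explicit: that $n_i$ and $n_j$ are the only neurons firing in this window (any recipient that internally fires does so at the boundary of the step, and its outgoing spikes, subject to the synaptic delay and refractory structure, are not counted within the same step), and that the direct mapping contributes a single weight-$s_w$ synapse per adjacency so that neither driven neuron can single-handedly supply two spikes to the same target. With those two points pinned down, the threshold arithmetic closes the argument immediately.
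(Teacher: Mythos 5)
Your proposal is correct and follows essentially the same route as the paper's proof: both arguments reduce to the threshold arithmetic (two unit-weight spikes needed to reach $v_{th}=2$ within one time step, and the only emitters in that window are the driven pair $n_i,n_j$), concluding that any internally firing $n_k$ is adjacent to both driven neurons and hence, with $e_{ij}$, completes a triangle. Your version simply spells out the bookkeeping the paper delegates to \Cref{cor:subgraph_edges_single} and the no-self-loop/no-multi-edge property of the direct mapping.
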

\begin{proof}
With the restriction that any internal firing must occur within $1$ time step, for any neuron to internally fire then $2$ spikes must arrive from the pair of externally driven neurons $n_i, n_j$.  In order for a neuron $n_k$ to fire it must be connected to both neurons that are being externally driven.  It is known that the edge $(i,j)$ exists on the graph and using the result from \Cref{cor:subgraph_edges_single} once $n_k$ internally fires this implies that edges $(i,k)$ and $(j,k)$ exist on the graph and thus triangle $(i,j,k)$ must exist on the original graph.
\end{proof}

In the following two examples we demonstrate vertex-wise enumeration of triangles for a single neuron $n_i$ (directly mapped from a vertex $v_i$ of degree $d_i$) through iterative applications of \Cref{thm:single_edge_triangle_count}, or by utilizing local clique verification.

\subsubsection*{Example: Iterative enumeration of triangles for a single vertex}
\label{ex:triangle_vertex_static}
Similar to the subgraph extraction example \Cref{sec:neighborhood_graph}, enumerating the triangles that contain a specific vertex ($v_i$) requires two steps.  First, single neuron driving is applied to the corresponding neuron ($n_i$) and after one time step all nearest neighbors can be identified $\lbrace n^{\prime} \rbrace$.  Thus any triangle that contains $n_i$ must be completed by two neurons $(n_j, n_k) \in \lbrace n^{\prime} \rbrace$.  Since it is known that the edges $w_{ij},w_{ik}$ are contained in the original graph the final edge of a tuple $w_{jk}$ can be identified using the edge-based triangle enumeration described in \ref{thm:single_edge_triangle_count}) along the edge $w_{ij}$ and any neuron $n_k$ that fires must be part of a tuple with $(n_i, n_j)$. This can be repeated for all edges $w_{ii^{\prime}}$ connecting $(n_i,n_i^{\prime}) \forall n_i^{\prime}\in \lbrace n^{\prime} \rbrace$. 

The number of steps needed to enumerate all triangles connected to a single vertex will depend on the edge density of the neighborhood $N_G(v_i)$.  We can establish a lower bound for the case of $N_G(v_i)=\mathcal{T}(v_i)$ the neighborhood graph being a tree.  In this case the number of steps needed to verify that $0$ triangles exist is $d-1$.  

This method will identify triangles but will not avoid over counting.  If only the number of triangles are needed, then after running the edge-based enumeration on all $d_i$ edges connected to $v_i$, the total number of triangles counted needs to be divided by 2.  If the explicit tuples are needed, then storing each tuple as a sorted list $(n_i,n_j,n_k)$ as they are identified can be used to avoid over counting. Alternatively the triangles can be explicitly counted using a modified clique verification test.  Verification tests will be discussed in detail in \Cref{sec:verification}, here we introduce the specific case for triangles.

\subsubsection*{Example: Explicit enumeration of triangles for a single vertex}
\label{ex:triangles_vertex_clique}
If a graph on $3$ vertices is a triangle then it is also a clique ($K_3$).  In addition to the edge-wise and vertex-wise tests above we can also explicitly count all triangles that contain a vertex ($v_i$).  Once the nearest neighbors $\lbrace n^{\prime} \rbrace$ of the neuron $n_i$ with $d_i$ outgoing synapses are identified, the spike threshold for these neurons is reset to $v_{th}=2$ and the threshold for all other neurons in $\mathcal{S}$ is set to $|E|+1$.  From the set of $d_i$ neighbors, form all $ \binom{d_i}{2}$ pairs $(n_j,n_k)$ and use multiple neuron driving and \Cref{thm:clique_test} to identify valid triangles $(n_i, n_j, n_k)$.

\section{Verification algorithms}
\label{sec:verification}
In this section we describe several verification routines for clique identification.  These tests are applied to subgraphs $\mathcal{G}_n \subseteq \mathcal{G}$ of order $n$ to verify if $\mathcal{G}_n=K_n$.  First, we give a corollary to the edge-based triangle enumeration method in \Cref{thm:single_edge_triangle_count}. In \Cref{thm:single_edge_triangle_count}, we used multiple neuron driving across a single edge ($K_2$) combined with higher spike thresholds $v_{th}=2$ to identify $K_3$ cliques (triangles).  This process can be generalized and used to identify $K_n$ cliques that can be expanded into $K_{n+1}$ cliques. 
\begin{corollary}[Expanding $K_n$ to $K_{n+1}$]
\label{cor:clique_expansion}
Given a subgraph $\mathcal{G}_n$ on $n$ vertices known to form the clique $K_n$, any vertex that can be added to the subgraph to form $K_{n+1}$ can be identified using an SNS $\mathcal{S}(N_{t_R=0},W_{s_w=1,\delta=n})$ with heterogeneous firing thresholds. After applying multiple neuron driving to all neurons in $K_n$, any neuron outside $K_n$ that fires after $1$ time step can be added to $\mathcal{G}_n$ to form $K_{n+1}$. 
\end{corollary}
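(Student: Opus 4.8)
The plan is to lift the coincidence-detection argument of \Cref{thm:single_edge_triangle_count} from a driven edge ($n=2$) to a driven $n$-clique. The one essential design choice is the assignment of the heterogeneous thresholds promised in the statement: following the construction of \Cref{thm:subgraph_neurons_static}, I would give every candidate neuron (those outside $K_n$) the threshold $v_{th}=n$, and give the $n$ clique neurons a threshold so high (say $v_{th}=|E|+1$) that they never fire internally. The elevated threshold makes the clique neurons act purely as synchronized spike sources, while $v_{th}=n$ turns each candidate into a detector that fires exactly when it receives $n$ coincident spikes.

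First I would apply multiple neuron driving to all $n$ neurons of $K_n$ at time $0$, so that they fire simultaneously and, by the direct mapping (\Cref{def:direct_mapping}) with $s_w=1$ and uniform delay $\delta$, each emits one spike along every incident synapse. A candidate neuron $v$ therefore receives exactly one spike for every edge joining it to a clique vertex, and since $\mathcal{G}$ is simple and the delay is uniform these spikes arrive synchronously one propagation step later. The excitation accumulated by $v$ in this window thus equals its number of clique neighbours $d_{K_n}(v)\le n$, while each clique neuron receives only $n-1$ spikes (from the other members of the complete $K_n$) and so stays silent.

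The core of the proof is the equivalence ``$v$ fires within one time step $\iff$ $\{v\}\cup V(K_n)$ induces $K_{n+1}$.'' If $v$ fires it must have collected at least $v_{th}=n$ spikes, which forces $d_{K_n}(v)=n$; by \Cref{cor:subgraph_edges_single} each received spike certifies an edge, so $v$ is adjacent to every vertex of $K_n$, and as $K_n$ is already complete the induced subgraph on $\{v\}\cup V(K_n)$ is $K_{n+1}$. Conversely, if $v$ completes $K_{n+1}$ then $d_{K_n}(v)=n$, so $v$ sees exactly $n=v_{th}$ coincident spikes and fires, with no refractory blocking because $t_R=0$ and $v$ has not fired before.

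The step I expect to demand the most care is controlling the \emph{absence} of a refractory period, which is the parameter that reverses the role $t_R$ played in \Cref{cor:spike_ball}. With $t_R=0$ a correctly firing candidate immediately returns spikes into the network, so a second propagation round could excite a candidate adjacent to many first-round firers and produce a false positive. The resolution is that the statement reads the raster after exactly one propagation step (the ``$1$ time step''): the uniform delay $\delta$ fixes the length of this window and guarantees that all returned spikes arrive strictly later, so the neurons firing at that instant are precisely the vertices extending $K_n$ to $K_{n+1}$. I would close by noting that the elevated clique-neuron thresholds also keep those neurons out of any secondary cascade, making the measured response unambiguous; the specific value $\delta=n$ is consistent with the per-step clock-time accounting summarized in \Cref{tab:methods_summary}, but beyond being positive it is not itself required for correctness.
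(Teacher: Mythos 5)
Your proposal is correct and follows essentially the same route as the paper: drive all $n$ clique neurons simultaneously and use a threshold of $v_{th}=n$ on the candidate neurons as a coincidence detector, so that a neuron outside $K_n$ fires within one propagation step exactly when it is adjacent to all $n$ clique vertices. The only difference is peripheral: the paper sets the clique neurons' thresholds to $v_{th}=n-1$ (mirroring its clique-verification test, \Cref{thm:clique_test}, so those neurons refire internally), whereas you silence them with $v_{th}=|E|+1$ (mirroring \Cref{thm:subgraph_neurons_static}); since the corollary only reads out neurons \emph{outside} $K_n$, both choices work, and yours slightly simplifies the spike traffic. Your added discussion of $t_R=0$ and why the uniform delay prevents second-round false positives addresses a timing subtlety the paper's proof passes over in silence, which is a point in your write-up's favor rather than a deviation.
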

\begin{proof}
The neurons $N(\mathcal{S})$ are separated into disjoint subsets $\lbrace n \rbrace$, $\lbrace n^{\prime} \rbrace$. The set $\lbrace n \rbrace$ are the $K_n$ neurons and the firing thresholds are set to $v_{th}=n-1$. All neurons in $\lbrace n^{\prime} \rbrace = \lbrace N \rbrace \setminus \lbrace n \rbrace$ have firing thresholds set to $v_{th}=n$.

Applying multiple neuron driving to all neurons in $\lbrace n \rbrace$ will cause these $n$ neurons to externally fire.  If another vertex exists in the graph $\mathcal{G}$ that is connected to all vertices of $K_n$, then that corresponding neuron with $v_{th}=n$ will receive $n$ spikes and will internally fire.  
\end{proof}

The triangle enumeration of \Cref{thm:single_edge_triangle_count} can be implemented with either static or plastic synapses, and the same is true for \Cref{cor:clique_expansion}. Implementing the test in \Cref{cor:clique_expansion} will identify a set of $m$ vertices, any of which can be added to form $K_{n+1}$ from $K_n$.  We now define a general test that can be applied to a subgraph that will return True if the subgraph is $K_{n}$.  This test was previously used in \Cref{ex:triangles_vertex_clique}.

\begin{theorem}[Verifying a subgraph is a clique]
\label{thm:clique_test}
A subgraph $\mathcal{G}_n \subseteq \mathcal{G}$ of order $n$ is a clique $ \mathcal{G}_n = K_n$ if $\mathcal{G}_n$ can be directly mapped to a set of $n$ neurons with $v_{th}=n-1$ that will all internally fire when multiple neuron driving is applied to all neurons in the set.
\end{theorem}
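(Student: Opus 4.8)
The plan is to reduce the claim to a tight spike-counting argument at the firing threshold, exactly in the spirit of the proofs of \Cref{thm:single_edge_triangle_count} and \Cref{cor:clique_expansion}. First I would fix the SNS parameters implied by the statement: each of the $n$ neurons is assigned $v_{th}=n-1$ with unit synaptic weight $s_w=1$, and multiple neuron driving applies an external stimulus to all $n$ neurons so that each one externally fires at the initial time step. By \Cref{def:direct_mapping}, a neuron $n_j$ in the set receives a spike from another set-neuron $n_i$ within one time step precisely when the edge $e_{ij}$ is present in $\mathcal{G}_n$. Since the graph has no self-loops, $n_j$ receives no spike from its own firing, and since only the $n$ neurons of the set are externally driven, within a single time step $n_j$ accumulates exactly $\deg_{\mathcal{G}_n}(v_j)$ spikes, each of weight $1$.

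Next I would translate the firing condition into a degree condition. With $s_w=1$, neuron $n_j$ internally fires if and only if it receives at least $v_{th}=n-1$ spikes, i.e. if and only if $\deg_{\mathcal{G}_n}(v_j)\ge n-1$. Because $\mathcal{G}_n$ has only $n$ vertices, the degree of any vertex is at most $n-1$, so this inequality forces $\deg_{\mathcal{G}_n}(v_j)=n-1$; equivalently, $v_j$ is adjacent to every other vertex of $\mathcal{G}_n$. The choice $v_{th}=n-1$ is precisely the value that makes the maximal attainable in-set spike count coincide with the firing threshold, which is what makes the test sharp.

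I would then close both directions so that the construction is a genuine verification test. If $\mathcal{G}_n=K_n$, every vertex has degree $n-1$, so every neuron receives $n-1$ spikes and all $n$ neurons internally fire. Conversely, if all $n$ neurons internally fire, the preceding paragraph shows every vertex is adjacent to all others, hence every pair of vertices is joined by an edge and $\mathcal{G}_n=K_n$. Invoking \Cref{cor:subgraph_edges_single} lets me phrase the adjacency conclusion directly in terms of the existence of each edge $e_{ij}$ on the original graph.

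The main obstacle I anticipate is one of bookkeeping rather than of mathematical depth: I must argue that, within the single time step under consideration, no spurious spikes reach any $n_j$ from neurons outside the set. This follows because only the $n$ set-neurons are externally driven, and any spike relayed through an out-of-set neuron would incur at least one additional synaptic delay and therefore arrive after the relevant window. I would state this delay and timing assumption explicitly, mirroring the uniform-delay convention used in \Cref{cor:clique_expansion}, so that the spike count at each $n_j$ is guaranteed to equal its in-subgraph degree and no more, making the threshold comparison exact.
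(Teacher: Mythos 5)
Your proposal is correct and follows essentially the same route as the paper's proof: both reduce the test to a spike-counting argument in which each set neuron receives exactly its in-subgraph degree in spikes, so the threshold $v_{th}=n-1$ fires precisely those neurons adjacent to all others in $\mathcal{G}_n$, giving both directions of the equivalence. The one real divergence is how out-of-set interference is excluded --- the paper raises the thresholds of all neurons outside the set to $|E|+1$ so they can never fire, whereas you let them fire but argue any relayed spike incurs an extra synaptic delay and lands outside the one-step window; both are sound, though note the paper also fixes $t_R=1<\delta=2$ so the driven neurons are out of their refractory period when the $n-1$ spikes arrive, a condition your timing convention should state explicitly.
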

\begin{proof}
Begin by directly mapping $\mathcal{G}$ to $\mathcal{S}(N_{t_R=1},W_{s_w=1,\delta=2})$ where the neurons have heterogeneous firing thresholds.  The vertices of $\mathcal{G}_n$ are mapped to the neuron set $\lbrace n^{\prime} \rbrace$ and $v_{th}=n-1$ for all $n_i \in \lbrace n^{\prime} \rbrace$.  To prevent any other neuron in $\mathcal{S}$ from firing, $v_{th}=|E|+1$ for all remaining neurons $n_j \in \lbrace N \rbrace \setminus \lbrace n^{\prime} \rbrace$.  

Multiple neuron driving is applied to all neurons in $\lbrace n^{\prime} \rbrace$ and each will fire $1$ spike along all outgoing synapses.  If the neurons form a clique $K_n$, then in the following time step $n-1$ spikes will arrive at each neuron.  After the delay, the $n-1$ spikes are sufficient to cause the $n$ neurons to fire a second time.  If a neuron $n_i \in \lbrace n \rbrace$ has degree $d_i < n-1$ then only $d_i$ spikes will arrive and this neuron will not reach the spike threshold $v_{th}=n-1$.  Thus if $\mathcal{G}_n \neq K_n$ only $n^{\prime\prime}<n^{\prime}$ neurons will fire.  
\end{proof}

The use of static or plastic synapses does not alter the overall complexity or simulation time of this test.  If plastic synapses are used then the active edges in the neuron set $\lbrace n \rbrace$ can be identified, and specifically for the case of $\mathcal{G}_n \neq K_n$ the absent edges can be found.  


\section{Conclusions}
\label{sec:conclusions}
The use of spiking neurons to perform tasks beyond neuroscience simulations and neural network style computation is a developing area of scientific computing.  Characteristics of spiking neurons, such as the isotropic firing of neurons that lends itself to massive parallelization, make them very attractive candidates for graph analysis.  We believe that by replacing matrix-based computing with spike-based computing there is the potential to speed up elements of graph analysis and in this work, we have proposed several ways in which spiking neurons can be incorporated into algorithmic primitives for graphs.  Since it it likely that neuromorphic systems will be included in future heterogeneous computers, expanding their use cases for non-neural network algorithms will allow more effective utilization of these processors in the future.  

Of all the neuromorphic hardware characteristics (built to efficiently execute spiking neuron models) the ability to implement synaptic plasticity poses the greatest advantage for graph algorithms. Because not all neuromorphic hardware systems implement synaptic plasticity, We have demonstrated multiple versions of algorithms that can utilize iterative solving (without synaptic plasticity) or parallel solving (with synaptic plasticity).  As such, though synaptic plasticity may be more difficult and energy intensive to implement in a neuromorphic system, it can reduce the computation time required for some of the tasks described here.

We plan to continue our development of graph algorithms for neuromorphic systems by incorporating these spike-based primitives into larger routines.  For example, hybrid algorithms can be constructed by combining these primitives with non-spiking data structures to store temporary variables.  In particular, we plan to investigate how one can implement distributed, parallel computing with neuromorphic systems in order to solve large-scale graph problems.

\section*{Acknowledgments}
Research sponsored in part by the Laboratory Directed Research and Development Program of Oak Ridge National Laboratory, managed by UT-Battelle, LLC, for the U. S. Department of Energy. This material is based upon work supported by the U.S. Department of Energy, Office of Science, Office of Advanced Scientific Computing Research, under contract number DE-AC05-00OR22725.

\bibliographystyle{unsrt}
\bibliography{bib_files/SGA_lit.bib}

\end{document}